  \newcommand{\Extra}[1]{}
  \newcommand{\OCMXVII}{Vovk/etal:2019ML}
  \newcommand{\OCMXVIII}{Vovk:arXiv1708}
  \newcommand{\OCMXX}{Vovk/etal:2018Braverman}
  \newcommand{\OCMXXII}{Vovk/etal:2018COPA}
  \newtheorem{theorem}{Theorem}
  \newtheorem{corollary}[theorem]{Corollary}
  \newtheorem{proposition}[theorem]{Proposition}
  \theoremstyle{definition}
  \newtheorem{definition}{Definition}
\DeclareMathOperator{\Prob}{\mathbb{P}}
\DeclareMathOperator{\conv}{conv}
\newcommand{\R}{\mathbb{R}}
\newcommand{\G}{\mathbb{G}}
\newcommand{\U}{\mathbb{U}}
    \def\citet{\@ifnextchar[{\@twith}{\@twithout}}
    \def\@twith[#1]#2{\cite[#1]{#2}}
    \def\@twithout#1{\cite{#1}}
    \def\citep{\@ifnextchar[{\@pwith}{\@pwithout}}
    \def\@pwith[#1]#2{\cite[#1]{#2}}
    \def\@pwithout#1{\cite{#1}}
    \def\citealt{\@ifnextchar[{\@altwith}{\@altwithout}}
    \def\@altwith[#1]#2{\cite[#1]{#2}}
    \def\@altwithout#1{\cite{#1}}
    \def\citealp{\@ifnextchar[{\@alpwith}{\@alpwithout}}
    \def\@alpwith[#1]#2{\cite[#1]{#2}}
    \def\@alpwithout#1{\cite{#1}}
  \title{Conformal calibrators}
  \author{Vladimir Vovk, Ivan Petej, Paolo Toccaceli, and Alex Gammerman}
\begin{document}
\maketitle

\begin{abstract}
  Most existing examples of full conformal predictive systems,
  split-conformal predictive systems,
  and cross-conformal predictive systems
  impose severe restrictions on the adaptation of predictive distributions
  to the test object at hand.
  In this paper we develop split-conformal and cross-conformal predictive systems
  that are fully adaptive.
  Our method consists in calibrating existing predictive systems;
  the input predictive system is not supposed to satisfy any properties of validity,
  whereas the output predictive system is guaranteed to be calibrated in probability.
  It is interesting that the method may also work
  without the IID assumption, standard in conformal prediction.

     The version of this paper at \url{http://alrw.net} (Working Paper 23)
     is updated most often.
\end{abstract}

\section{Introduction}

Conformal predictive distributions were inspired by the work on predictive distributions in parametric statistics
(see, e.g., \citealp[Chapter~12]{Schweder/Hjort:2016} and \citealp{Shen/etal:2017})
and first suggested in \citet{\OCMXVII}.
As usual, we will refer to algorithms producing conformal predictive distributions
as conformal predictive systems
(CPS, used in both singular and plural senses).

Conformal predictive systems are built on top of traditional prediction algorithms
to ensure a property of validity usually referred to as calibration in probability
\citep{Gneiting/Katzfuss:2014}.
Several versions of the Least Squares Prediction Machine,
CPS based on the method of Least Squares,
are constructed in \citet{\OCMXVII}.
This construction is slightly extended to cover ridge regression
and then further extended to nonlinear settings by applying the kernel trick
in \citet{\OCMXX}.
However, even after this extension the method is not fully adaptive,
even for a universal kernel.
As explained in \citet[Section~7]{\OCMXX},
the universality of the kernel shows in the ability of the predictive distribution function
to take any shape;
however, the CPS is still inflexible in that the shape does not depend,
or depends weakly, on the test object.

For many base algorithms full CPS
(like full conformal predictors in general)
are computationally inefficient,
and \citet{\OCMXXII} define and study computationally efficient versions of CPS,
namely split-conformal predictive systems (SCPS)
and cross-conformal predictive systems (CCPS).
However, specific SCPS and CCPS proposed in \citet{\OCMXXII}
are based on the split conformity measure
\begin{equation}\label{eq:example}
  A(z_1,\dots,z_m,(x,y))
  :=
  \frac{y-\hat y}{\hat\sigma},
\end{equation}
where $\hat y$ is a prediction for $y$ computed from $x$ as test object
and $z_1,\dots,z_m$ as training sequence,
and $\hat\sigma>0$ is an estimate of the quality of $\hat y$ computed from the same data.
The predictive distributions corresponding to \eqref{eq:example}
are slightly more adaptive:
not only their location but also their scale depends on the test object $x$.
Ideally, however, we would like to allow
a stronger dependence on the test object.
This paper follows \citet[Section~10]{\OCMXVIII}
in using a method that is fully flexible
and, for a suitable base algorithm,
adapts fully to the test object,
both asymptotically and in practical problems
(cf.\ Proposition~\ref{prop:adapt} below).
Whereas the emphasis in \citet{\OCMXVIII}
is on asymptotic optimality only,
one of the purposes of this paper is to propose practically useful solutions.

This is a very preliminary version of the paper;
we plan to submit a more mature and self-contained version to COPA 2019.
For now we will freely use the terminology and notation of \citet{\OCMXXII}.

\section{Predictive systems and randomized predictive systems}

Let us fix (until Section~\ref{sec:experiments})
a nonempty measurable space $\mathbf{X}$,
which will serve as our \emph{object space},
and let $\mathbf{Z}:=\mathbf{X}\times\R$ stand for our \emph{observation space}.
Each observation $z=(x,y)\in\mathbf{Z}$ consists of an object $x\in\mathbf{X}$ and its label $y\in\R$.

\begin{definition}
  A measurable function $Q:\mathbf{Z}^{n+1}\to[0,1]$ is called a \emph{predictive system} (PS) if:
  \begin{enumerate}
  \item
    For each training sequence $(z_1,\dots,z_n)\in\mathbf{Z}^n$ and each test object $x\in\mathbf{X}$,
    the function $Q(z_1,\dots,z_n,(x,y))$ is monotonically increasing in $y$
    (where ``monotonically increasing'' is understood in the wide sense allowing intervals of constancy).
  \item
    For each training sequence $(z_1,\dots,z_n)\in\mathbf{Z}^n$ and each test object $x\in\mathbf{X}$,
    \begin{equation*}
      \lim_{y\to-\infty}
      Q(z_1,\dots,z_n,(x,y))
      =
      0
    \end{equation*}
    and
    \begin{equation*}
      \lim_{y\to\infty}
      Q(z_1,\dots,z_n,(x,y))
      =
      1.
    \end{equation*}
  \end{enumerate}
\end{definition}
The output $y\in\R\mapsto Q(z_1,\dots,z_n,(x,y))$ of a PS on a given training sequence $z_1,\dots,z_n$
and test object $x$ will be referred to as a \emph{predictive distribution}
and will sometimes be denoted $Q_{z_1,\dots,z_n,x}$.
It is a distribution function in the sense of probability theory
except that we do not require that it be right-continuous.

We also need the notion of a randomized predictive system.
\begin{definition}\label{def:RPS}
  A measurable function $Q:\mathbf{Z}^{n+1}\times[0,1]\to[0,1]$ is called a \emph{randomized predictive system} (RPS) if:
  \begin{enumerate}
  \item
    For each training sequence $(z_1,\dots,z_n)\in\mathbf{Z}^n$ and each test object $x\in\mathbf{X}$,
    the function $Q(z_1,\dots,z_n,(x,y),\tau)$ is monotonically increasing in $y$
    and monotonically increasing in $\tau$.
  \item
    For each training sequence $(z_1,\dots,z_n)\in\mathbf{Z}^n$ and each test object $x\in\mathbf{X}$,
    \begin{equation*}
      \lim_{y\to-\infty}
      Q(z_1,\dots,z_n,(x,y),0)
      =
      0
    \end{equation*}
    and
    \begin{equation*}
      \lim_{y\to\infty}
      Q(z_1,\dots,z_n,(x,y),1)
      =
      1.
    \end{equation*}
  \end{enumerate}
\end{definition}
The output $y\in\R\mapsto Q(z_1,\dots,z_n,(x,y),\tau)$ of an RPS
on a given training sequence $z_1,\dots,z_n$, test object $x$, and (random) number $\tau$
will be referred to as a \emph{predictive distribution} (function)
and will sometimes be denoted $Q_{z_1,\dots,z_n,x,\tau}$.

Notice that Definition~\ref{def:RPS} does not include any requirement of validity,
unlike the corresponding definitions in \citet{\OCMXVII,\OCMXVIII,\OCMXX,\OCMXXII}:
in this paper we follow the terminology of \citet[Chapter~12]{Schweder/Hjort:2016}
rather than \citet{Shen/etal:2017}.

An RPS $Q$ is \emph{calibrated in probability} if,
for any probability measure $P$ on $\mathbf{Z}$,
as function of random training observations $Z_1\sim P$,\dots, $Z_n\sim P$,
a random test observation $Z\sim P$, and a random number $\tau\sim U$
($U$ being the uniform probability measure on $[0,1]$),
all assumed independent,
the distribution of $Q$ is uniform:
\begin{equation}\label{eq:R2}
  \forall \alpha\in[0,1]:
  \Prob
  \left(
    Q(Z_1,\dots,Z_n,Z,\tau)
    \le
    \alpha
  \right)
  =
  \alpha.
\end{equation}
(This was included as Requirement~R2 in the definition of an RPS
in \citealt{\OCMXVII,\OCMXVIII,\OCMXX,\OCMXXII}.)

\section{Split-conformal calibrators}

When considered as a split conformity measure
each predictive system is balanced and isotonic
(at least if we ignore its values 0 and 1),
which makes it possible to apply Proposition~3.1 in \citet{\OCMXXII}.

If $A$ is a predictive system,
the \emph{split-conformalized predictive system} (SCPS)
corresponding to $A$ is defined as follows
(following the definition of a split-conformal transducer in \citealt{\OCMXXII}).
The training sequence $z_1,\dots,z_n$ is split into two parts:
the \emph{training sequence proper} $z_1,\dots,z_m$ and the \emph{calibration sequence} $z_{m+1},\dots,z_n$;
we are given a test object $x$.
The output of $C^A$ is defined as
\begin{multline}\label{eq:SCPS}
  C^A(z_1,\dots,z_n,(x,y),\tau)
  :=
  \frac{1}{n-m+1}
  \left|\left\{i=m+1,\dots,n\mid\alpha_i<\alpha^y\right\}\right|\\
  +
  \frac{\tau}{n-m+1}
  \left|\left\{i=m+1,\dots,n\mid\alpha_i=\alpha^y\right\}\right|
  +
  \frac{\tau}{n-m+1},
\end{multline}
where the \emph{conformity scores} $\alpha_i$, $i=m+1,\dots,n$, and $\alpha^y$, $y\in\R$,
are defined by
\begin{equation*}
  \begin{aligned}
    \alpha_i
    &:=
    A(z_1,\dots,z_m,(x_i,y_i)),
      \qquad i=m+1,\dots,n,\\
    \alpha^y
    &:=
    A(z_1,\dots,z_m,(x,y)).
  \end{aligned}
\end{equation*}

For simplicity, let us assume that $A$ never takes values $0$ and $1$.
By \citet[Proposition~3.1]{\OCMXXII},
every split-conformalized predictive system is an RPS.
The functional mapping predictive systems to the corresponding split-conformalized predictive systems
are \emph{split-conformal calibrators}.

  \begin{algorithm}[bt]
    \caption{Split-Conformal Calibrator}
    \label{alg:SCPS}
    \begin{algorithmic}
      \Require
        A training sequence $(x_i,y_i)\in\mathbf{Z}$, $i=1,\dots,n$, and $m<n$.
      \Require
        A test object $x\in\mathbf{X}$ and random number $\tau\in[0,1]$.
      \For{$i\in\{1,\dots,n-m\}$}
        \State $p_i:=A(z_1,\dots,z_m,z_{m+i})$
      \EndFor
      \State sort $p_1,\dots,p_{n-m}$ in the increasing order obtaining $p_{(1)}<\dots<p_{(k)}$
      \For{$j\in\{1,\dots,k\}$}
        \State $n_j:=\left|\left\{i=1,\dots,n-m\mid p_i=p_{(j)}\right\}\right|$
        \State $m_j:=\sup\{y\mid A(z_1,\dots,z_m,(x,y))<p_{(j)}\}$
        \State $M_j:=\inf\{y\mid A(z_1,\dots,z_m,(x,y))>p_{(j)}\}$
      \EndFor
      \State return the predictive distribution $C^A$ given by~\eqref{eq:C} for the label $y$ of $x$.
    \end{algorithmic}
  \end{algorithm}

The SCPS $C^A$ can be implemented by directly coding the definition \eqref{eq:SCPS}
using a grid of values of $y$
(as we do for the experiments in Section~\ref{sec:experiments}).
Algorithm~\ref{alg:SCPS} describes another implementation of $C^A$.
It defines the predictive distribution apart from a finite number of points $y$
(and so the values at those points do not affect, e.g., CRPS);
we can set the probability interval
$\conv(\{C^A(z_1,\ldots,z_n,(x,y),\tau)\mid\tau\in[0,1]\})$
at those points $y$ to the union of the probability intervals at the adjacent points
without a substantial change to the predictive system.
Some of the $p_i$, $i=1,\dots,n-m$, may coincide,
so we can only say that $k\in\{1,\dots,n-m\}$
(notice that the sequence $p_{(j)}$, $j=1,\dots,k$, is strictly increasing).
The predictive distribution that it outputs is
\begin{multline}\label{eq:C}
  C^A(z_1,\dots,z_n,(x,y),\tau)
  ={}\\
  \begin{cases}
    \frac{\tau}{n-m+1} & \text{if $y<m_1$}\\
    \frac{n_1+\dots+n_{j-1}+\tau n_j+\tau}{n-m+1} & \text{if $y\in(m_j,M_j)$, $j\in\{1,\dots,k\}$}\\
    \frac{n_1+\dots+n_j+\tau}{n-m+1} & \text{if $y\in(M_j,m_{j+1})$, $j\in\{1,\dots,k-1\}$}\\
    \frac{n_1+\dots+n_k+\tau}{n-m+1} = \frac{n-m+\tau}{n-m+1} & \text{if $y>M_k$}.
  \end{cases}
\end{multline}

Algorithm~\ref{alg:SCPS} is a slight generalization of Algorithm~1 in \citet{\OCMXXII}.
The latter makes an assumption
(the base distribution functions $A_{z_1,\dots,z_n,x}$ being continuous and strictly increasing)
implying that $m_j=M_j$ for all $j\in\{1,\dots,k\}$;
in our current general context we can only say that
\[
  m_1 \le M_1 \le m_2 \le M_2 \le \dots \le m_k \le M_k.
\]

Notice that the split conformity measure \eqref{eq:example},
which is used in \citet{\OCMXXII},
is not covered directly by our definition since it does not have to take values in $[0,1]$.
But this can be easily arranged:
e.g., we can apply the sigmoid function to \eqref{eq:example}
to make sure it takes values in $[0,1]$.

Split-conformal predictive systems are automatically calibrated in probability,
in the sense of satisfying~\eqref{eq:R2},
under the IID assumption.
If $F$ is the distribution function produced for a test object $X^*$,
\(
  F
  :=
  C^A_{Z_1,\dots,Z_n,X^*,\tau}
\),
then $F(Y^*)$ will be distributed uniformly on $[0,1]$,
where $Y^*$ is the true label of $X^*$.
Notice, however, that for a test sequence $Z^*_i=(X^*_i,Y^*_i)$,
$i=1,\dots,k$,
$F_i(Y_i^*)$ will not be independent,
even though distributed uniformly on $[0,1]$,
where $F_i:=C^A_{Z_1,\dots,Z_n,X^*_i,\tau_i}$
is the distribution function produced for $X^*_i$.
To make $F_i(Y_i^*)$ not only distributed uniformly on $[0,1]$
but also independent,
we can use the ``semi-online'' protocol,
predicting the labels $Y^*_i$ of $X^*_i$, $i=1,\dots,k$, sequentially
and adding $Z^*_i$ to the calibration sequence as soon as it is processed.
(This assumes that $Z_i$, $Z^*_i$, and $\tau_i$ are all independent.)
This remark might be useful for debugging implementations of split-conformal calibrators.

\section{Cross-conformal calibrators}

We can easily combine several split-conformal calibrators
into a cross-conformal calibrator,
exactly in the same way as in \citet[Section~4]{\OCMXXII}.
The resulting RPS will lose automatic calibration in probability~\eqref{eq:R2}
but will use the available data more efficiently.

\section{Conformalizing ideal predictive systems}

In this section we will explore the efficiency of conformal calibrators
in the situation where the base predictive system $A$ is the ideal one.
In this case we cannot improve $A$,
and we are interested in how much worse $C^A$ can become as compared with $A$.
(This is the question asked in a slightly different context
independently by Evgeny Burnaev and Larry Wasserman.)
If, for any $A$, $C^A$ is almost as good as $A$,
we can say that our conformal calibrator is fully adaptive.

In this section we only consider the IID case.
Let $P$ be the true probability measure on $\mathbf{Z}$
generating the observations $z_1,z_2,\dots$.
A \emph{conditional distribution function} for $P$
is a right-continuous function $A:\mathbf{Z}\to[0,1]$ satisfying, for each $y\in\R$,
\begin{equation}\label{eq:cdf}
  A(X,y)
  =
  \Prob(Y\le y\mid X)
  \quad
  \text{a.s.}
\end{equation}
when $(X,Y)\sim P$.
The existence and a.s.\ uniqueness of a conditional distribution function
follows from standard results about the existence of regular probability distributions
(e.g., \citealt[Theorem~10.2.2]{Dudley:2002}).

Consider a sequence $\xi_1,\xi_2,\dots$ of independent and uniformly distributed random variables
$\xi_i\sim U$.
Let $\G_n$ be the empirical distribution function of $\xi_1,\dots,\xi_n$;
we are using the notation of \citet{Shorack/Wellner:1986},
who refer to $\G_n$ as the \emph{uniform empirical distribution function}.
For large $n$ and with high probability,
$\G_n$ is close to the main diagonal of the unit square $[0,1]^2$.

Let us use the true conditional distribution function $A$ as base predictive system
(roughly, this corresponds to an infinitely long training sequence proper, $m=\infty$).
The corresponding \emph{ideal conformalized predictive system} (ICPS)
is defined as
\begin{multline*}
  C^A(z_1,\dots,z_n,(x,y),\tau)
  :=
  \frac{1}{n+1}
  \left|\left\{i=1,\dots,n\mid A(x_i,y_i)<A(x,y)\right\}\right|\\
  +
  \frac{\tau}{n+1}
  \left|\left\{i=1,\dots,n\mid A(x_i,y_i)=A(x,y)\right\}\right|
  +
  \frac{\tau}{n+1},
\end{multline*}
where $x$ is the test object.
Intuitively, the whole training sequence is used as the calibration sequence
(we do not need a training sequence proper as $A$ is already perfect).
An ICPS is an idealization of both SCPS and CCPS.

The following two propositions say that $C^A$ will be close to $A$
and that the distance between them will be of order $n^{-1/2}$.

\begin{proposition}\label{prop:adapt}
  Suppose the conditional distribution function $A_x:=A(x,\cdot)$
  (for the true probability measure)
  is continuous and strictly increasing
  for almost all $x\in\mathbf{X}$.
  Then the ICPS $C^A$ satisfies
  \begin{equation*}
    \left(
      C^A_{Z_1,\dots,Z_n,X,\tau}
      \circ
      A^{-1}_{X}
    \right)_{n=1}^{\infty}
    \stackrel d=
    \left(
      \G_n + \eta_n
    \right)_{n=1}^{\infty},
  \end{equation*}
  where $\stackrel d=$ means the equality of distributions
  and $\eta_n$ are random functions in the Skorokhod space $D[0,1]$
  satisfying $\left\|\eta_n\right\|_{\infty}\le1/(n+1)$ a.s.
\end{proposition}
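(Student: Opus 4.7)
The plan is to apply the probability integral transform, which reduces the claim to a simple algebraic rewriting of the ICPS formula. Set $\xi_i := A(X_i, Y_i) = A_{X_i}(Y_i)$ for each $i$. Conditionally on $X_i$, the random variable $Y_i$ has d.f.\ $A_{X_i}$ by \eqref{eq:cdf}, and the hypothesis that $A_x$ is continuous and strictly increasing for a.e.\ $x$ lets us invoke the classical probability integral transform: $\xi_i \mid X_i$ is uniform on $[0,1]$ a.s., and hence $\xi_i$ is uniform on $[0,1]$ unconditionally. Independence of the $\xi_i$ is inherited from that of the $(X_i, Y_i)$, so $(\xi_i)_{i \ge 1}$ is precisely an iid uniform sequence of the type used to define $\G_n$.

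Next, I would reparametrize by $y = A^{-1}_X(u)$, which is a.s.\ well defined on $[0,1]$ because $A_X$ is a.s.\ continuous and strictly increasing. Using $A(X, A^{-1}_X(u)) = u$, the ICPS formula becomes
\begin{equation*}
  C^A_{Z_1,\dots,Z_n,X,\tau}(A^{-1}_X(u))
  =
  \frac{|\{i:\xi_i < u\}| + \tau \, |\{i:\xi_i = u\}| + \tau}{n+1}.
\end{equation*}
Almost surely the $\xi_i$ are pairwise distinct, so for every $u \in [0,1] \setminus \{\xi_1,\dots,\xi_n\}$ the middle term vanishes and $|\{i:\xi_i < u\}| = n \G_n(u)$, where $\G_n$ is the right-continuous empirical d.f.\ of $\xi_1,\dots,\xi_n$. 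On this set of full Lebesgue measure we therefore have
\begin{equation*}
  C^A_{Z_1,\dots,Z_n,X,\tau}(A^{-1}_X(u))
  =
  \G_n(u) + \frac{\tau - \G_n(u)}{n+1},
\end{equation*}
which suggests setting $\eta_n(u) := (\tau - \G_n(u))/(n+1)$. The bound $\|\eta_n\|_\infty \le 1/(n+1)$ is then automatic from $\tau, \G_n(u) \in [0,1]$.

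It remains to extend this identity to the $n$ jump points $u = \xi_j$. At each such point the raw formula gives $C^A \circ A^{-1}_X(\xi_j) = (n \G_n(\xi_j) + 2\tau - 1)/(n+1)$, which in general differs from $\G_n(\xi_j) + \eta_n(\xi_j)$; however a short direct computation shows that the right-limit of $C^A \circ A^{-1}_X$ at $\xi_j$ equals $\G_n(\xi_j) + \eta_n(\xi_j) = (r_j + \tau)/(n+1)$, where $r_j$ is the rank of $\xi_j$ among $\xi_1,\dots,\xi_n$. Passing to the right-continuous modification of $C^A \circ A^{-1}_X$ in $D[0,1]$ therefore yields pointwise equality with $\G_n + \eta_n$. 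Because the same iid uniform sequence $(\xi_i)_{i \ge 1}$ is used simultaneously for every $n$, this couples $(C^A_{Z_1,\dots,Z_n,X,\tau} \circ A^{-1}_X)_{n=1}^\infty$ with $(\G_n + \eta_n)_{n=1}^\infty$ and yields the claimed equality of distributions. The main conceptual step is the probability integral transform identifying the $\xi_i$ as iid uniform; after that, everything reduces to routine algebra together with the mild reconciliation at the jump points.
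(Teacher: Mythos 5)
Your proof is correct and follows essentially the same route as the paper's: apply the probability integral transform to get that $\xi_i := A_{X_i}(Y_i)$ are i.i.d.\ uniform, reparametrize via $u = A_X(y)$, and observe that the ICPS formula becomes $(n\G_n(u)+\tau)/(n+1)$, so that $\eta_n = C^A\circ A^{-1}_X - \G_n$ satisfies the $1/(n+1)$ bound. The one place you go beyond the paper is the treatment of the random jump points $u=\xi_j$: you correctly note that the raw formula there gives $(r_j-1+2\tau)/(n+1)$, which can deviate from $\G_n(\xi_j)$ by as much as $2/(n+1)$, and you fix this by passing to the right-continuous modification so that the function lies in $D[0,1]$ and agrees with $\G_n+\eta_n$ everywhere. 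The paper's proof works with a fixed $t$ (where the event $\{\xi_i = t\text{ for some }i\}$ has probability zero) and then takes a supremum over $k$ and $\tau$, implicitly sweeping the finitely many exceptional points under the rug; your version is slightly more careful on this point, but the underlying argument is the same.
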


\begin{proof}
  For given $t\in[0,1]$ and $n$,
  \begin{multline*}
    C^A_{Z_1,\dots,Z_n,X,\tau}
    \left(
      A^{-1}_{X}(t)
    \right)
    =
    \frac{1}{n+1}
    \left|\left\{
      i\in\{1,\dots,n\}
      \mid
      A_{X_{i}}(Y_i) < t
    \right\}\right|\\
    +
    \frac{\tau}{n+1}
    \left|\left\{
      i\in\{1,\dots,n\}
      \mid
      A_{X_{i}}(Y_i) = t
    \right\}\right|
    +
    \frac{\tau}{n+1}
    =
    \frac{k}{n+1}
    +
    \frac{\tau}{n+1},
  \end{multline*}
  where the second equality holds almost surely and
  \[
    k
    :=
    \left|\left\{
      i\in\{1,\dots,n\}
      \mid
      A_{X_{i}}(Y_i) \le t
    \right\}\right|.
  \]
  It remains to notice that the probability integral transforms $A_{X_{i}}(Y_i)\sim U$ are IID
  and that
  \[
    \sup_{\tau\in[0,1],k\in\{0,\dots,n\}}
    \left|
      \frac{k}{n+1}
      +
      \frac{\tau}{n+1}
      -
      \frac{k}{n}
    \right|
    =
    \frac{1}{n+1}.
    \qedhere
  \]
\end{proof}

\begin{corollary}\label{cor:adapt}
  Suppose the conditional distribution function $A_x$
  is continuous and strictly increasing for almost all $x\in\mathbf{X}$.
  Then the ICPS $C^A$ corresponding to $A$
  approaches $A$ in the sense of
  \begin{equation}\label{eq:adapt}
    \sqrt{n}
    \left(
      C^A_{Z_1,\dots,Z_n,X,\tau}
      \circ
      A^{-1}_{Z_1,\dots,Z_n,X}
      -
      I
    \right)
    \Rightarrow
    \U
    \quad
    \text{as $n\to\infty$},
  \end{equation}
  where $I:[0,1]\to[0,1]$ is the identity function $I(t)=t$, $t\in[0,1]$,
  and $\U$ is a Brownian bridge.
\end{corollary}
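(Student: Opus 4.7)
The plan is to derive the corollary directly from Proposition~\ref{prop:adapt} by combining the distributional identity it provides with the classical Donsker theorem for the uniform empirical process. Note that since $A$ is the true conditional distribution function, it does not actually depend on $Z_1,\dots,Z_n$, so I read $A^{-1}_{Z_1,\dots,Z_n,X}$ in \eqref{eq:adapt} as $A^{-1}_X$, matching the notation of Proposition~\ref{prop:adapt}.

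First I would invoke Proposition~\ref{prop:adapt} to rewrite the process of interest as
\[
  \sqrt{n}\bigl(C^A_{Z_1,\dots,Z_n,X,\tau}\circ A^{-1}_X - I\bigr)
  \stackrel d=
  \sqrt{n}(\G_n - I) + \sqrt{n}\,\eta_n,
\]
viewed as random elements of the Skorokhod space $D[0,1]$. Here the distributional identity is used for each $n$ separately; that is sufficient because weak convergence is a property of the marginal laws.

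Next I would apply Donsker's theorem for the uniform empirical process (see, e.g., Shorack and Wellner, 1986), which yields $\sqrt{n}(\G_n - I)\Rightarrow\U$ in $D[0,1]$, where $\U$ is a Brownian bridge. The residual term is handled by the uniform bound supplied by Proposition~\ref{prop:adapt}: since $\|\eta_n\|_\infty\le 1/(n+1)$ a.s., we have
\[
  \bigl\|\sqrt{n}\,\eta_n\bigr\|_\infty
  \le
  \frac{\sqrt{n}}{n+1}
  \xrightarrow[n\to\infty]{} 0
  \quad\text{a.s.,}
\]
so $\sqrt{n}\,\eta_n\to 0$ in probability in the uniform (hence Skorokhod) topology.

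Finally I would conclude by Slutsky's lemma for weak convergence in metric spaces: the sum of a weakly convergent sequence and a sequence tending to zero in probability converges weakly to the same limit. Therefore $\sqrt{n}(\G_n - I) + \sqrt{n}\,\eta_n\Rightarrow\U$, which, combined with the distributional identity above, gives \eqref{eq:adapt}. I do not expect any real obstacle in this argument; the only points requiring a little care are the identification of the correct interpretation of $A^{-1}_{Z_1,\dots,Z_n,X}$ and the verification that Slutsky's lemma applies in $D[0,1]$ with the Skorokhod topology, which is standard since the perturbation converges uniformly to zero.
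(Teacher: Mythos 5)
Your proposal is correct and follows essentially the same route as the paper: apply the distributional identity from Proposition~\ref{prop:adapt}, invoke Donsker's theorem $\sqrt{n}(\G_n-I)\Rightarrow\U$, and dispose of the $\sqrt{n}\,\eta_n$ term by the converging-together (Slutsky) lemma, which is the ``invariance of weak convergence under small perturbations'' the paper cites from Billingsley. The only added value in your write-up is the explicit (and correct) note that $A^{-1}_{Z_1,\dots,Z_n,X}$ should be read as $A^{-1}_X$ since the ideal $A$ does not depend on the training data.
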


\begin{proof}
  This follows from Proposition~\ref{prop:adapt}
  by the standard result $n^{1/2}(\G_n-I)\Rightarrow\U$
  about the weak convergence of empirical processes
  (see, e.g., \citealt[Theorem~16.4]{Billingsley:1968})
  and the invariance of weak convergence under small perturbations
  (e.g., \citealt[Theorem~4.1]{Billingsley:1968}).
\end{proof}

According to \eqref{eq:adapt},
the speed of convergence of $C^A$ to $A$ is indeed $O(n^{-1/2})$.
This speed of convergence is the same as for the Dempster--Hill procedure
\citealp[Section~5.1]{\OCMXVII}.
In the case of Gaussian $y_i$ (and with $x_i$ absent),
this is stated in \citet[Theorem~4]{\OCMXVII},
but it is true without any parametric assumptions.
Notice that the Dempster--Hill procedure is a special case of our procedure
corresponding to $x_i$ absent and any continuous and strictly increasing
\[
  A_{z_1,\dots,z_m,x}
  =
  A_{y_1,\dots,y_m}
  =
  A
\]
(the first equality saying that $x_i$ are absent and the second being our restriction on $A$).
Since conformity measures $A$ and $\phi(A)$ lead to the same conformal transducer
provided $\phi$ is strictly increasing,
we can just set $A(y):=y$, $y\in\R$.

\section{Experimental results}
\label{sec:experiments}

The main question that we plan to explore in this section
is whether our conformalization procedure improves the performance of standard predictive systems
for artificial and benchmark data sets.
(Alternatively, it might happen that standard predictive systems are calibrated or almost calibrated automatically,
and the extra calibration step does not help.)
In this version of the paper we only consider one standard predictive system
and one toy artificial data set.

The predictive system that we consider is the Nadaraya--Watson predictive system
(first introduced in the density form in \citealt{Rosenblatt:1969})
\begin{equation}\label{eq:NW}
  F(y\mid x)
  =
  \frac
  {
    \sum_{i=1}^n
    \sigma
    \left(
      \frac{y-y_i}{h}
    \right)
    G
    \left(
      \frac{x-x_i}{g}
    \right)
  }
  {
    \sum_{i=1}^n
    G
    \left(
      \frac{x-x_i}{g}
    \right)
  },
\end{equation}
where we will take $\sigma$ to be the sigmoid distribution function
\[
  \sigma(u)
  :=
  \frac{1}{1+e^{-u}}
\]
and $G$ the Gaussian kernel
\[
  G(u)
  :=
  \frac{1}{\sqrt{2\pi}}
  e^{-u^2/2}.
\]

\begin{figure}[bt]
  \begin{center}
    \includegraphics[width=0.7\textwidth,trim={9mm 8mm 9mm 28mm},clip]{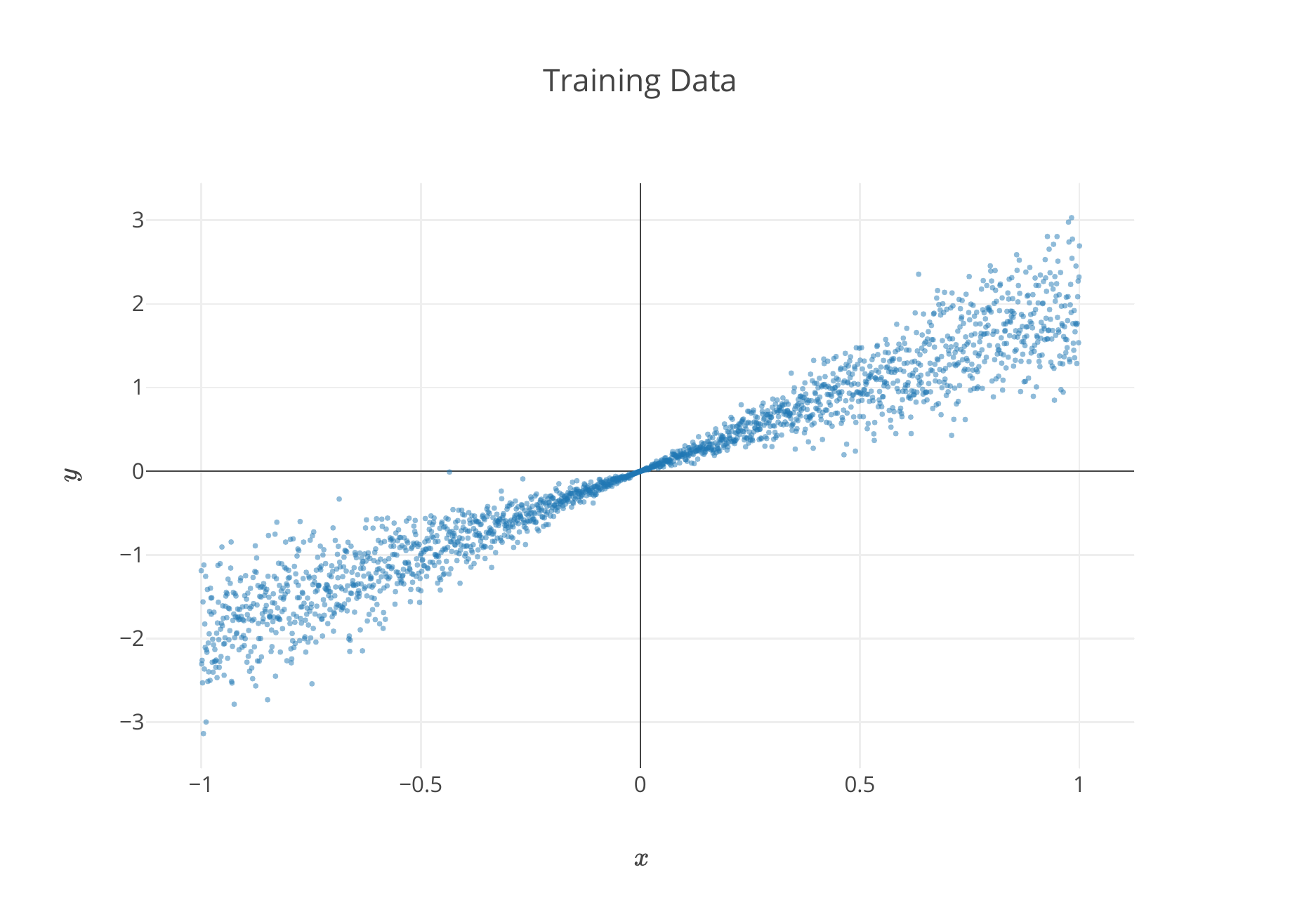}
  \end{center}
  \caption{The toy training set.}
  \label{fig:data}
\end{figure}

The labels $y_i$ are generated as
\[
  y_i := 2x_i + \epsilon_i,
\]
where $\epsilon_i$ is Gaussian noise with mean 0 and standard deviation $\left|x_i\right|/2$,
and the objects $x_i$ are drawn from the uniform distribution on $[-1,1]$;
$x_i$ and $\epsilon_i$, $i=1,2,\dots$, are all independent.
A training set of size $2000$ is shown in Figure~\ref{fig:data}.

\begin{figure}[bt]
  \begin{center}
    \includegraphics[width=\textwidth,trim={9mm 9mm 7mm 26mm},clip]{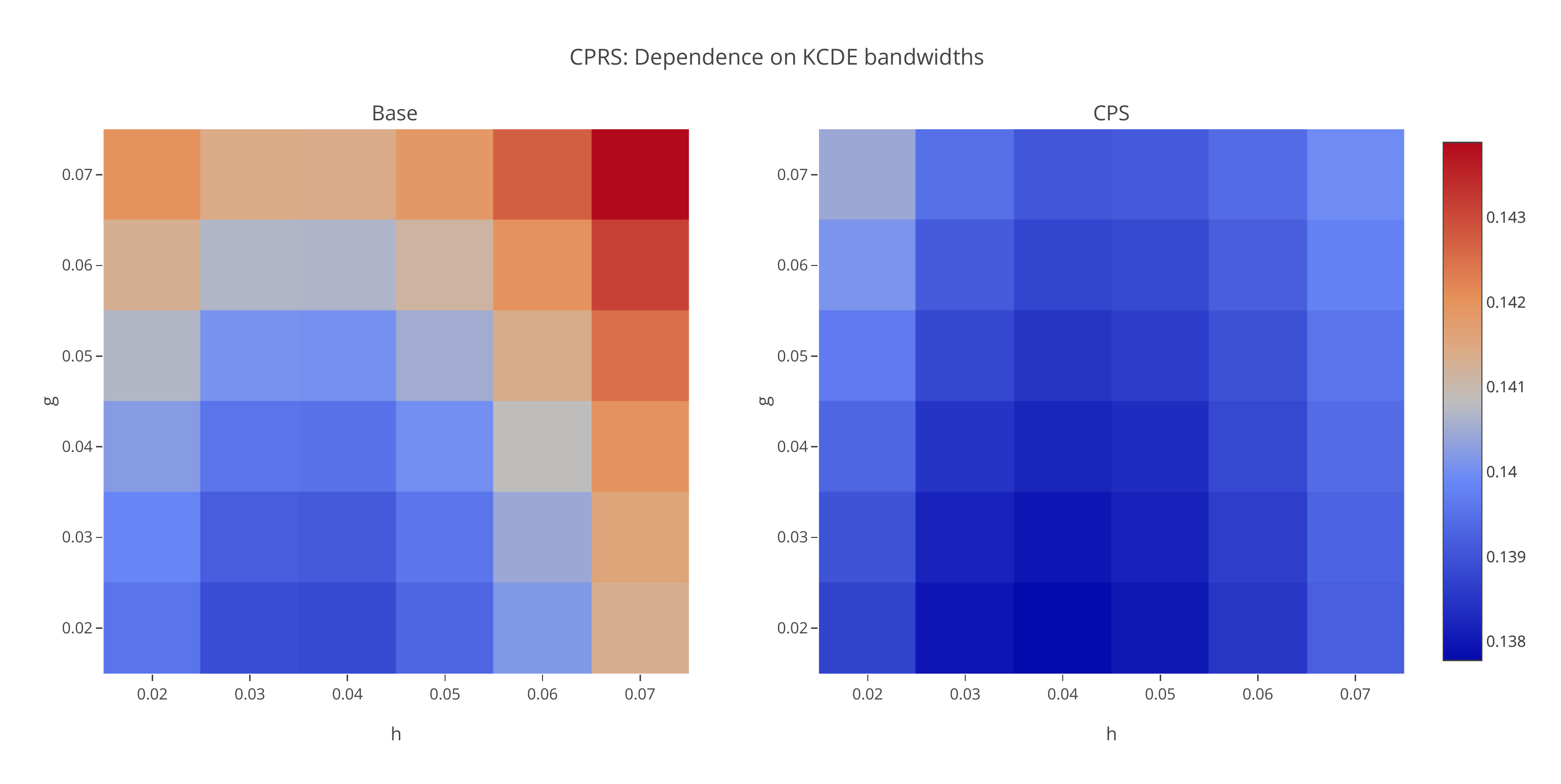}
  \end{center}
  \caption{Performance of the Nadaraya--Watson predictive system (left) and its conformalized version (right)
    for a range of $g$ and $h$.}
  \label{fig:heatmap}
\end{figure}

The loss of predictions is measured by CRPS (continuous ranked probability score),
as described in \cite[Section~7]{\OCMXXII}.
The left panel of Figure~\ref{fig:heatmap}
shows the loss,
averaged over $1000$ test observations,
of the Nadaraya--Watson predictive system~\eqref{eq:NW}
for various values of parameters $g$ and $h$.
The right panel shows the loss of the Nadaraya--Watson predictive system
calibrated using a separate calibration sequence of size $1000$.
We can see that calibration improves the performance of the base predictive system
for a wide range of parameter values.

\section{Calibration without the IID assumption}

A standard assumption in conformal prediction is that the observations are generated in the IID fashion
(sometimes this assumption is slightly weakened to assuming an online compression model,
as in \citealt[Chapter~8]{Vovk/etal:2005book}).
Therefore, it is interesting that Proposition~\ref{prop:adapt}
continues to hold in the absence of this assumption.
Indeed, the proof only depends on the probability integral transforms $A_{X_i}(Y_i)$
being distributed uniformly on $[0,1]$ and independent,
which does not require the IID assumption.
This is a well-known fact going back to L\'evy \citet[Section~39]{Levy:1937},
who only assumed that the distribution functions $A_x$ are continuous
for all $x\in\mathbf{X}$.
(Modern papers usually refer to Rosenblatt \citealt{Rosenblatt:1952},
who disentangled L\'evy's argument from his concern with the foundations of probability,
but Rosenblatt referred to L\'evy \citealt{Levy:1937} in his paper.)

To see an example where the conformalization procedure
works very well in the absence of the IID assumption,
suppose the base PS outputs the predictive distribution $\phi(A_x)$
for each test object $x$,
where $A$ is the true conditional distribution function
(defined by \eqref{eq:cdf})
and $\phi:[0,1]\to[0,1]$ is a very non-linear increasing function,
such as $\phi(u):=u^2$.
(So that the base PS has perfect resolution
but is badly miscalibrated.)
By Corollary~\ref{cor:adapt},
the conformalized version (not depending on $\phi$)
of the base PS will quickly converge to $A$,
whereas the base PS will always remain poor.

On the negative side,
in the absence of the IID assumption conformal calibrators
have no validity guarantees.

\section{Conclusion}

There are many directions of further research,
including:
\begin{itemize}
\item
  applying conformal calibrators to a wider range of artificial data and to benchmark datasets;
\item
  analyzing the predictive performance of conformal calibrators
  conditional on the test object $x$;
  optimizing conditional performance might require using Mondrian (namely, object-conditional) conformal calibrators
  and their modifications;
\item
  analyzing the predictive performance of conformal calibrators
  when applied to benchmark time series
  and in other non-IID situations.
\end{itemize}

\subsection*{Acknowledgments}

Thanks to Claus Bendtsen and the rest of the AstraZeneca team
and to Philip Dawid for useful discussions.
This work has been supported by
AstraZeneca (grant number R10911, ``Machine Learning for Chemical Synthesis'')
and Centrica.

\end{document}